


\documentclass{article}

\usepackage{times}
\usepackage{soul}
\usepackage{url}
\usepackage[hidelinks]{hyperref}
\usepackage[utf8]{inputenc}
\usepackage[small]{caption}
\usepackage{graphicx}
\usepackage{amsmath}
\usepackage{booktabs}
\usepackage{xcolor}
\urlstyle{same}

\usepackage{algorithm}
\usepackage[noend]{algpseudocode}
\usepackage{amssymb}
\usepackage{amsthm}

\newtheorem{theorem}{Theorem}
\newtheorem{definition}{Definition}

\def \R {\mathbb R}
\def \Z {\mathbb Z}  
\def \N {\mathbb N}
\def \x {\mathbf{x}} 
\def \lb {l} 
\def \ub {u} 
\def \w {{\mathbf w}} 
\def \c {{\mathbf c}} 

\algrenewcommand\algorithmicrequire{\textbf{Input}}
\algrenewcommand\algorithmicensure{\textbf{Output}}

\newcommand{\red}[1]{#1}





\begin{document}

\title{Black-box Combinatorial Optimization 
using Models with Integer-valued Minima}



\author{Laurens Bliek
\thanks{All authors at
              Delft University of Technology
              Faculty of Electrical Engineering, Mathematics and Computer Science\newline
              Van Mourik Broekmanweg 6
              \newline 2628 XE Delft
              \newline The Netherlands
              \newline 
              Email: l.bliek@tudelft.nl
}%
, Sicco Verwer and Mathijs de Weerdt
}


              


\maketitle

\begin{abstract}
When a black-box optimization objective
can only be evaluated with costly or noisy measurements, most standard optimization algorithms  are unsuited to find the optimal solution.
Specialized algorithms that deal with exactly this situation make use of surrogate models.
These models are usually continuous and smooth, which is beneficial for continuous optimization problems, but not necessarily for combinatorial problems.
However, by choosing the basis functions of the surrogate model in a certain way, we show that it can be guaranteed that the optimal solution of the surrogate model is integer.
This approach
\red{outperforms random search, simulated annealing and one Bayesian optimization algorithm on the problem of finding robust routes for a noise-perturbed traveling salesman benchmark problem, with similar performance as another Bayesian optimization algorithm, and outperforms all compared algorithms on a convex binary optimization problem with a large number of variables.}
\end{abstract}

\section{Introduction}

Traditional optimization techniques make use of a known mathematical formulation of the objective function, for example by calculating the derivative or a lower bound.
However, many objective functions in real-life situations have no known mathematical formulation.
For example, smart grids or railways are complex networks where every decision influences the whole network.
In such applications, we can observe the effect of decisions either in real life, or by running a simulation.
Waiting for such a result can take some time, or may have some other cost associated with it.
Furthermore, the outcome of two observations with the same decision variables may be different.
Such problems have been approached using methods such as
black-box or Bayesian optimization~\cite{jones1998efficient}, simulation-based optimization~\cite{gosavi2003simulation}, and derivative-free optimization~\cite{conn2009introduction}.
Here, a model fits the relation between decision variables and objective function, and then standard optimization techniques are used on the model instead of the original objective.
These so-called surrogate modeling techniques have been applied successfully to continuous optimization problems in
signal processing~\cite{DONEpaper}, optics~\cite{DONEpaper}, machine learning~\cite{snoek2012practical}, robotics~\cite{martinez2009bayesian}, and more.
However, it is still an on-going research question on how these techniques can be applied effectively to combinatorial optimization problems.
A common approach is to simply round to the nearest integer, a method that is known to be sub-optimal in traditional optimization, and also in black-box optimization~\cite{garrido2017dealing}.
Another option is to use discrete surrogate models from machine learning such as regression trees \cite{verwer2017auction} or linear model trees \cite{verbeeck2013multi}. Although powerful, this makes both model fitting and optimization computationally expensive.


This work describes an approach where the surrogate model is still continuous, but where finding the optimum of the surrogate model gives an integer solution.
The main contributions are as follows:
\begin{itemize}
    \item This surrogate modeling algorithm, called IDONE, with two variants (one with a basic and one with a more complex surrogate model).
    \item A proof that finding the optimum of the surrogate model gives an integer solution.
    \item Experimental results that show when IDONE outperforms random search, simulated annealing and Bayesian optimization.
\end{itemize}

Section~\ref{sec:lit} gives a general description of the problem and an overview of related work.
Section~\ref{sec:alg} describes the IDONE algorithm and the proof.
In Section~\ref{sec:results}, IDONE is compared to random search, simulated annealing and Bayesian optimization on two different problems: finding robust routes in a noise-perturbed traveling salesman benchmark problem, and a convex binary optimization problem.
Finally, Section~\ref{sec:conclusion} contains conclusions and future work.






\section{Problem description and related work}\label{sec:lit}

Consider the problem of minimizing an objective $f: \R^{d} \rightarrow \R$ with integer and bound constraints:
\begin{align}
     \min_{\x} &\  f(\x) \nonumber\\
    \mathbf{s.t.\ } &\  \x \in \Z^d,\nonumber\\
     &\  \lb_i \leq x_i \leq \ub_i, \ i=1, \ldots, d.
    \label{eq:mainproblem}
\end{align}
These bounds are also assumed to be integer.
It is assumed that $f$ does not have a known mathematical formulation, and can only be accessed via noisy measurements $y=f(x)+\epsilon$, $y\in \R$, with $\epsilon \in \R$ zero-mean noise with finite variance.
Furthermore, taking a measurement $y$ is computationally expensive or is expensive due to a long measuring time, human decision making, deteriorating biological samples, or other reasons.
Examples are hyper-parameter optimization in deep learning~\cite{CDONEpaper}, contamination control of a food supply chain~\cite{baptista2018bayesian}, and structure design in material science~\cite{ueno2016combo}.

Although many standard optimization methods are unfit for this problem, there exists a vast number of methods that were designed with most of the above assumptions in mind.
For example, local search heuristics~\cite{aarts2003local} such as hill-climbing, simulated annealing, or taboo search are general enough to be applied to this problem, and have the advantage of being easy to implement.
These heuristics are considered as the baseline in this work, \red{together with random search (simply assign the variables completely random values and keep the best results)}.
Population-based heuristics such as genetic algorithms~\cite{rajeev1992discrete}, particle swarm optimization~\cite{kennedy1997discrete}, and ant colony optimization~\cite{dorigo1999ant} operate in the same way as local search algorithms, but keep track of multiple candidate solutions that together form a population.
These algorithms 
have been applied successfully in many applications, but
are unfit for the problem described in this paper since evaluating a whole population of candidate solutions is not practical if each measurement is expensive.
The same holds for algorithms that filter out the noise by averaging, such as COMPASS~\cite{hong2006discrete}, since they evaluate one candidate solution multiple times.
Typical integer programming algorithms such as branch and bound~\cite{MORRISON201679} work well on standard combinatorial problems, but when the objective function is unknown and observations of single values are noisy or expensive, it is very difficult to obtain reasonable bounds.

Surrogate modeling techniques operate in a different way from the above methods: past measurements are used to fit a model, which  \red{is then} used to select a next candidate solution.
Bayesian optimization algorithms~\cite{jones1998efficient,mockus2012bayesian}, for example, have been successfully applied in many different fields. These methods use probability theory to determine the most promising candidate point according to the surrogate model.
However, when the variables are discrete, the typical approach is to relax the integer constraints, which often leads to sub-optimal solutions~\cite{garrido2017dealing}.
The authors in~\cite{garrido2017dealing}  
tackled this problem
by modifying the covariance function \red{used in the surrogate model}.
\red{Another} approach\red{,} based on semi-definite programming\red{,} is given in~\cite{baptista2018bayesian}.
\red{And the HyperOpt algorithm~\cite{bergstra2013hyperopt} takes yet a different approach by using a Tree-structured Parzen Estimator as the surrogate model, which is discrete in case the variables are discrete.
HyperOpt is considered the main contender in this paper.
}

A downside of many Bayesian optimization algorithms is that the computation time per iteration scales quadratically with the number of measurements taken up until that point.
This causes these methods to become slower over time, and after a certain number of iterations they may even violate the assumption that the  bottleneck in the problem is the cost of evaluating the objective.
This downside
is recognized and overcome in two similar algorithms: COMBO~\cite{ueno2016combo} and DONE~\cite{DONEpaper}.
Both algorithms use the power of random features~\cite{rahimi2008uniform} to get a fixed computation time every iteration, but COMBO is designed for combinatorial optimization while DONE is designed for continuous optimization.
A disadvantage of COMBO is that it evaluates the surrogate model at every possible candidate point, a number that grows exponentially with the input dimension $d$.
Though evaluating the surrogate model takes very little time (compared to evaluating the original objective $f$), this still makes the algorithm unfit for problems where the input dimension $d$ is large.
A variant of DONE named CDONE has been applied to a mixed-integer problem, where the integer constraints were relaxed~\cite{CDONEpaper}, but as mentioned earlier, this can lead to sub-optimal solutions.

However, the downside of having to relax the integer constraints can be circumvented.
By choosing the basis functions in a certain way, we show how a model can be constructed for which it is known beforehand that the minima of the model lie exactly in integer points.
This makes it possible to apply the algorithm to combinatorial problems, as explained in the next section.

\section{IDONE algorithm}\label{sec:alg}

The DONE algorithm~\cite{DONEpaper} and its variants are able to solve problem~\eqref{eq:mainproblem} without the integer constraint by making use of a surrogate model.
Every time a new measurement $y=f(\x)+\epsilon$ comes in, the surrogate model is updated, the minimum of the surrogate model is found, and an exploration step is performed.
To make the algorithm suitable for combinatorial optimization
we propose a variant of DONE called IDONE (integer-DONE), where the surrogate model is guaranteed to have integer-valued minima.


\subsection{Piece-wise linear surrogate model}\label{sec:surrogatemodel}

The proposed surrogate model $g: \R^d \rightarrow \R$ is a linear combination of rectified linear units $\mathrm{ReLU}(z) = \max(0,z)$, a basis function that is commonly used in the deep learning community~\cite{lecun2015deep}:
\begin{align}
    g(\x) & = \sum_{k=1}^D c_k \max\left\{0, z_k(\x)\right\},\nonumber\\
    z_k(\x) & = \w_k^T \x + b_k, \label{eq:model}
\end{align}
with $\x\in \R^d$ and $\w_k\in \R^d, b_k\in \R, c_k\in \R$ for $k=1,\ldots, D$.
Unlike what is common practice in the deep learning community, the parameters $\w_k$ and $b_k$ remain fixed in this surrogate model.
This makes the model linear in its parameters ($c_k$), allowing it to be trained via linear regression instead of iterative methods.
This is explained in Section~\ref{sec:training}.

Because of the choice of basis functions, the surrogate model is actually piece-wise linear, which causes its local minima to lie in one of its corner points:
%
\begin{theorem}\label{thm:localminima}
Any strict local minimum of $g$ lies in a point $\x^*$ with $z_k(\x^*) = 0$ for $d$ linearly independent $z_k$.
\end{theorem}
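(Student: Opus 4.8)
The plan is to exploit the piece-wise linear structure of $g$. The hyperplanes $\{z_k(\x)=0\}$ partition $\R^d$ into cells, and on the interior of any such cell each term $\max\{0,z_k(\x)\}$ equals either $0$ or $z_k(\x)$ according to a fixed choice of signs, so $g$ coincides there with a single affine function. Since an affine function attains no strict local minimum on an open set, I would first observe that a strict local minimum $\x^*$ cannot lie in the interior of a cell; hence at least one constraint must be active, i.e.\ $z_k(\x^*)=0$ for some $k$. The real content of the theorem is upgrading ``at least one'' to ``$d$ linearly independent ones,'' and for this I would argue by contradiction.

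Let $A=\{k : z_k(\x^*)=0\}$ be the active set and suppose the gradients $\{\w_k : k\in A\}$ span only a proper subspace of $\R^d$. Then there is a nonzero direction $\mathbf{v}$ with $\w_k^T\mathbf{v}=0$ for every $k\in A$. I would examine $g$ along the line $\x^*+t\mathbf{v}$ for small $|t|$. For the active indices, $z_k(\x^*+t\mathbf{v})=z_k(\x^*)+t\,\w_k^T\mathbf{v}=0$, so those terms stay pinned at $0$. For the inactive indices, $z_k(\x^*)\neq 0$, so by continuity the sign of $z_k$ is unchanged for $|t|$ small enough, and each such term is affine in $t$ near $0$. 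Consequently $g(\x^*+t\mathbf{v})$ is an affine function of $t$ on a neighbourhood of $t=0$.

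An affine function of a single variable has no strict local minimum at an interior point (it is either strictly monotone or constant), so $\x^*$ cannot be a strict local minimum along $\mathbf{v}$, contradicting the assumption. Therefore the active gradients must span all of $\R^d$, from which I can extract $d$ linearly independent $\w_k$; the corresponding $z_k$ are then $d$ linearly independent affine functions vanishing at $\x^*$, as claimed. (Here ``linearly independent $z_k$'' is understood through the independence of their gradients $\w_k$, which is exactly what pins $\x^*$ down as a vertex of the arrangement.)

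The step I expect to require the most care is the claim that $g$ is affine along $\x^*+t\mathbf{v}$ for small $|t|$: it hinges on simultaneously keeping the active terms exactly zero via the orthogonality $\w_k^T\mathbf{v}=0$, and freezing the sign of every inactive $z_k$ within a small enough neighbourhood. Making the threshold on $|t|$ uniform over the inactive constraints is the only quantitative point, and it is immediate because there are finitely many $z_k$ and each inactive one is bounded away from zero at $\x^*$.
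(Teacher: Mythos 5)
Your proof is correct. Note that the paper itself gives no proof of this theorem: it is stated without one, justified only by the preceding informal remark that $g$ is piecewise linear and therefore has its local minima in corner points. Your argument supplies exactly the missing details, via the standard reasoning for piecewise-linear functions: if the gradients $\w_k$ of the active constraints do not span $\R^d$, pick $\mathbf{v}\neq 0$ orthogonal to all of them; the active terms remain pinned at zero along $\x^*+t\mathbf{v}$, the inactive terms keep their sign for all sufficiently small $|t|$ (uniformly, since there are finitely many), so $g$ is affine on a segment through $\x^*$ and cannot have a strict local minimum there. The one point that deserved care---what ``$d$ linearly independent $z_k$'' means---you handle correctly: for affine functions sharing the common zero $\x^*$, linear independence of the functions is equivalent to linear independence of their gradients $\w_k$, which is also how the statement is used later in the paper's proof of its Theorem~2 (where independence forces the active constraints to involve $d$ distinct coordinates).
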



The reverse of this theorem is not necessarily true: if $\hat \x$  satisfies $z_k(\hat \x) = 0$ for $d$ linearly independent $z_k$, then it  depends on the parameters $c_k$ of the model whether $\hat \x$ is actually a local minimum or not.


The number of local minima and their locations depend on the parameters $\w_k$ and $b_k$.
In this work, we provide two options for choosing these parameters in such a way that the local minima are always found in integer solutions.
In the first case, the functions $z_k$ are simply chosen to have zeros on hyper-planes that together form an integer lattice:
\begin{definition}[\red{Basic} model ]
 Let $g$ be as in~\eqref{eq:model}.
 The parameters $\w_k$, $b_k$ of \red{the basic model} are chosen according to Algorithm~\ref{alg:model1}.
%
%
 That is, every function $z_k$ is zero on a $(d-1)$-dimensional hyper-plane with $x_i=j$ for some dimension $i\in\{1,\ldots, d\}$ and some integer $\lb_i\leq j \leq \ub_i$.
\end{definition}

\begin{algorithm}[tbp]
\caption{\red{Basic} model parameters}\label{alg:model1}
 \begin{algorithmic}
 \Require $d$, $l_i$, $u_i$, $i=1,\ldots, d$
 \Ensure $\w_k$, $b_k$, $k=1,\ldots, D$
\State $k \leftarrow 1$
\State $\w_k\leftarrow[0, \ldots, 0]^T, b_k\leftarrow1$, $k \leftarrow k+1$
\Comment{model bias}
\For{$i=1, \ldots, d$}
    \For{$j=\lb_i, \ldots, \ub_i$}
        \State $\w \leftarrow \mathbf{e}_i$ \Comment{$\mathbf{e}_i = $ Unit vector in dimension $i$}
        \State $b=-j$
        \If{$j=\lb_i$} \Comment{Lower bound}
            \State $\w_k\leftarrow \w$, $b_k\leftarrow b$, $k\leftarrow k+1$
        \ElsIf{$j=\ub_i$} \Comment{Upper bound}
            \State $\w_k\leftarrow -\w$, $b_k\leftarrow -b$, $k\leftarrow k+1$
        \Else \Comment{Between the bounds}
            \State $\w_k\leftarrow \w$, $b_k\leftarrow b$, $k\leftarrow k+1$
            \State $\w_k\leftarrow -\w$, $b_k\leftarrow -b$, $k\leftarrow k+1$
        \EndIf
    \EndFor
\EndFor
\end{algorithmic}
\end{algorithm}

An example of a basis function in this model is $\max\{0,z_k(\x)\}$ with $z_k(\x)=x_5-3$.
This model has $D=1+2\sum_{i=1}^d{\ub_i-\lb_i}$ basis functions in total.
The $1$ comes from the model bias, a basis function that is equal to $1$ everywhere.
This allows the model to be shifted up or down.

Since all the basis functions depend only on one variable, this \red{basic} model might not be fit for problems where the decision variables have complex interactions.
Therefore, in the \red{advanced} model, we use the same basis functions, but we also add basis functions that depend on two variables:
\begin{definition}[\red{Advanced} model]
 Let $g$ be as in~\eqref{eq:model}.
 The parameters $\w_k$ and $b_k$ of \red{the advanced} model are chosen according to Algorithm~\ref{alg:model2}.
 That is, every function $z_k$ different from the ones in \red{the basic model} is zero on a $(d - 1)$-dimensional hyper-plane with $x_i-x_{i-1}=j$ for some dimension $i\in\{2, \ldots, d\}$ and some integer $\lb_i-\ub_{i-1} \leq j \leq \ub_i - \lb_{i-1}$.
\end{definition}


\begin{algorithm}[htbp]
\caption{\red{Advanced model} parameters}\label{alg:model2}
 \begin{algorithmic}
 \Require $d$, $l_i$, $u_i$, $i=1,\ldots, d$
 \Ensure $\w_k$, $b_k$, $k=1,\ldots, D$
 \State Perform Algorithm~\ref{alg:model1}.
\For{$i=2, \ldots, d$}
    \For{$j=\lb_i-\ub_{i-1}, \ldots, \ub_i-\lb_{i-1}$}
        \State $\w \leftarrow \mathbf{e}_i-\mathbf{e}_{i-1}$ \Comment{$\mathbf{e}_i = $ Unit vector in dimension $i$} 
        \State $b=-j$
        \If{$j=\lb_i-\ub_{i-1}$} \Comment{Lower bound}
            \State $\w_k\leftarrow \w$, $b_k\leftarrow b$, $k\leftarrow k+1$
        \ElsIf{$j=\ub_i-\lb_{i-1}$} \Comment{Upper bound}
            \State $\w_k\leftarrow -\w$, $b_k\leftarrow -b$, $k\leftarrow k+1$
        \Else \Comment{Between the bounds}
            \State $\w_k\leftarrow \w$, $b_k\leftarrow b$, $k\leftarrow k+1$
            \State $\w_k\leftarrow -\w$, $b_k\leftarrow -b$, $k\leftarrow k+1$
        \EndIf
    \EndFor
\EndFor
\end{algorithmic}
\end{algorithm}

This model has $D = 2\sum_{i=2}^D \ub_i-\lb_i+\ub_{i-1}-\lb_{i-1}$ more basis functions than \red{the basic model}.
The added functions $z_k$ are zero on diagonal lines through two variables, see Figure~\ref{fig:basisfunc}.

\begin{figure}[tb]
\centering
\includegraphics[width=0.45\columnwidth]{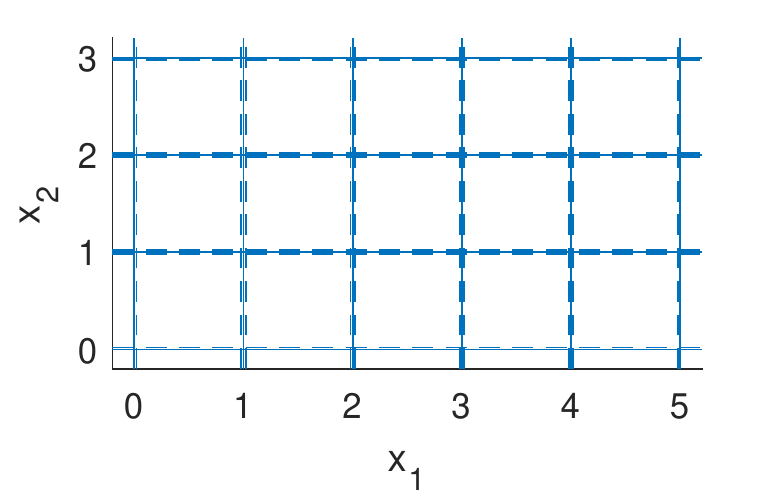}
\includegraphics[width=0.45\columnwidth]{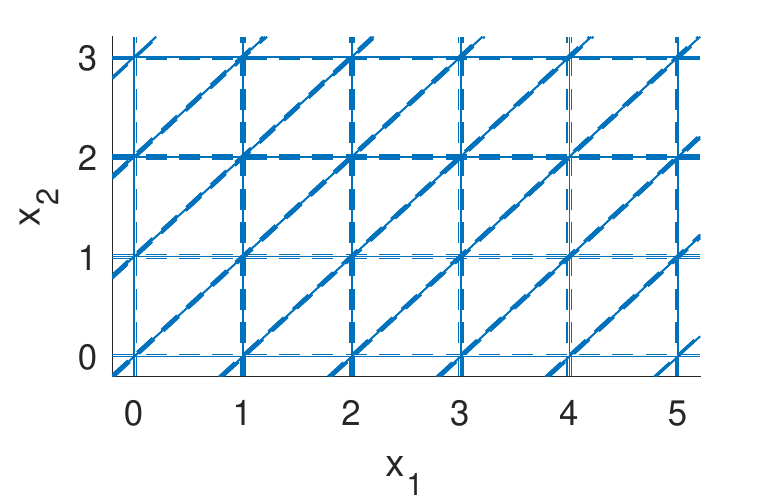}
\caption{Regions where the functions $z_k$ of the surrogate model are exactly zero, for \red{the basic model} (left) and \red{the advanced model} (right), for a problem with two variables with lower bounds $(0,0)$ and upper bounds $(5,3)$. The functions $z_k$ have been chosen in such a way that they cross exactly at integer points within the bounded region. This ensures that the model has its minimum in one of these points, making the model more suitable for combinatorial optimization problems.} \label{fig:basisfunc}
\end{figure}

We now show one of our main contributions.
\begin{theorem}\label{thm:main}
    (I) 
    If $\x^*$ is a strict local minimum of \red{the basic model}, then $\x^*\in \Z^d$ and $\lb_i \leq x_i \leq \ub_i, \ \forall i=1, \ldots, d$.
    
    \noindent (II) 
    If $\x^*$ is a strict local minimum of \red{the advanced model},
    then $\x^*\in \Z^d$.
    
    \noindent (III) If $\x^*$ is a non-strict local minimum of \red{the basic model}, it holds that the model retains the same value when going from $\x^*$ to the nearest point $\hat \x$ that satisfies $\hat \x \in \Z^d$ and $\lb_i \leq \hat x_i \leq \ub_k, \ \forall i=1, \ldots, d$.
    
    \noindent (IV) If $\x^*$ is a non-strict local minimum of \red{the advanced model},
    it holds that the model retains the same value when rounding $\x^*$ to the nearest integer.
\end{theorem}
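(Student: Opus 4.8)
The plan is to split the four claims into the two strict cases (I)--(II), which follow from Theorem~\ref{thm:localminima}, and the two non-strict cases (III)--(IV), which I would handle by a direct analysis of the flat directions of $g$.

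For (I) and (II), Theorem~\ref{thm:localminima} gives $d$ linearly independent functions $z_k$ that vanish at a strict local minimum $\x^*$. Stacking their gradients $\w_k$ as the rows of a matrix $A$ and the constants $-b_k$ as a vector $\mathbf b$ turns this into a nonsingular integer system $A\x^*=\mathbf b$ with $\mathbf b\in\Z^d$ (all $b_k$ are integers by Algorithms~\ref{alg:model1}--\ref{alg:model2}, and the all-zero bias row is never active). Since $A$ has integer entries, $\x^*=A^{-1}\mathbf b$ is integer as soon as $\det A=\pm1$. For the basic model every row equals $\pm\mathbf e_i$, so linear independence forces exactly one active constraint per coordinate, giving $x_i^*=j$ with integer $\lb_i\le j\le\ub_i$ directly; this proves (I), bounds included. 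For the advanced model the rows are of the two types $\pm\mathbf e_i$ and $\pm(\mathbf e_i-\mathbf e_{i-1})$. I would identify the full list of admissible rows with the node--arc incidence matrix of a directed graph on $\{0,1,\ldots,d\}$, where $\mathbf e_i-\mathbf e_{i-1}$ is the arc along the path $1,\ldots,d$ and $\mathbf e_i$ is the arc from an auxiliary ground node $0$ (with $x_0\equiv0$) to node $i$. Incidence matrices of directed graphs are totally unimodular, and dropping the ground column preserves this, so every nonsingular $d\times d$ submatrix --- in particular $A$ --- has $\det A=\pm1$; hence $\x^*\in\Z^d$, proving (II). No bound is claimed here, consistent with the fact that a diagonal constraint can pin a vertex outside the box.

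For the non-strict cases I would record the active hyperplanes at $\x^*$ as edges of the same graph $G$ on $\{0,1,\ldots,d\}$: a unit constraint $x_i=j$ is an edge to the ground node, a diagonal constraint $x_i-x_{i-1}=j$ is a path edge. Every component of $G$ meeting the ground node then has integer coordinates, while a component avoiding it only fixes a common fractional part $\theta_C$ among its coordinates. Because all coordinates of such a free component share $\theta_C$, coordinate-wise rounding to the nearest integer shifts the whole component by a single offset $t_C$, i.e. $\mathrm{round}(\x^*)=\x^*+\sum_C t_C\mathbf 1_C$ with $\mathbf 1_C=\sum_{i\in C}\mathbf e_i$. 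For the basic model $G$ has only ground edges, so the model is separable, $g=\text{const}+\sum_i g_i(x_i)$ with each $g_i$ univariate and piece-wise linear with integer breakpoints in $[\lb_i,\ub_i]$; then (III) reduces to the elementary fact that a local minimum of a univariate piece-wise linear function lying on a flat segment takes the same value as the nearest breakpoint, and clamping to the box does not change this.

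For (IV) I would show that the displacement $\mathbf v=\sum_C t_C\mathbf 1_C$ is a flat direction of $g$. Every active $z_k$ satisfies $\w_k^T\mathbf v=0$, since a diagonal active constraint has both endpoints in one component and a unit active constraint belongs to a ground component (which is not moved); active edges therefore never join two distinct moved components. To first order this gives $g(\x^*+s\mathbf v)-g(\x^*)=s\sum_{k:z_k(\x^*)>0}c_k\,\w_k^T\mathbf v$, and local minimality for both signs of $s$ forces the coefficient to vanish, so $g$ is locally constant along $\mathbf v$. The main obstacle is to upgrade this infinitesimal flatness to equality over the whole sub-unit rounding step, i.e. to check that no currently inactive hyperplane is crossed before $s=1$. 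I expect to resolve this with a fractional-part argument: a moved coordinate travels monotonically from $n_i+\theta_C$ to the adjacent integer without passing another integer, and for a diagonal hyperplane separating two free components the signed difference moves monotonically from its noninteger value to an integer reached only at $s=1$, because the rounding direction agrees with the sign of that difference. Hence $g$ stays affine with zero slope on the whole segment, yielding $g(\x^*)=g(\mathrm{round}(\x^*))$ and proving (IV).
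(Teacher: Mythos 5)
Your proposal is correct, and parts (I) and (III) essentially coincide with the paper's argument (linear independence forces one active unit constraint per coordinate; separability/linearity of the basic model along each coordinate segment plus local minimality). For (II) and (IV), however, you take a genuinely different route. The paper proves (II) by a counting contradiction: the active constraints force $x^*_i-x^*_{i-1}\in\Z$ for all $i$, so one fractional coordinate would make all coordinates fractional, which would require $d$ active diagonal constraints with distinct indices while Algorithm~\ref{alg:model2} only supplies $d-1$ of them. Your total-unimodularity argument (viewing the rows $\pm\mathbf{e}_i$ and $\pm(\mathbf{e}_i-\mathbf{e}_{i-1})$ as arcs of a directed graph with a ground node, so every nonsingular active submatrix has determinant $\pm1$) is heavier machinery but more robust: it would survive adding further difference-type basis functions $x_i - x_j$, where the paper's pigeonhole count would not. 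For (IV) the paper argues globally and more simply: coordinatewise rounding cannot change the sign of any $z_k$ of either form (because the offsets $j$ are integers), so $g$ is affine on the entire segment from $\x^*$ to its rounding, and non-strict local minimality then forces constancy. Your decomposition into connected components of the active-constraint graph, orthogonality of the rounding displacement to all active $\w_k$, and the two-sided perturbation $\pm s\mathbf{v}$ reach the same conclusion; notably, your explicit use of minimality in \emph{both} directions to kill the slope makes precise a step the paper leaves implicit (affineness plus one-sided minimality alone would only give $g(\bar\x^*)\geq g(\x^*)$), and your fractional-part monotonicity check plays exactly the role of the paper's sign-preservation lemma. Both approaches are sound; the paper's is shorter, yours exposes more of the underlying combinatorial structure.
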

\begin{proof}
(I) Let $\x^*$ be a strict local minimum of \red{the basic model}.
By Theorem~\ref{thm:localminima}, there are $d$ linearly independent $z_k$ with $z_k(\x^*)=0$.
From Algorithm~\ref{alg:model1} it can be seen that all functions $z_k$ have the form $z_k(\x) = \pm(x_i-j)$, for some $i=1,\ldots, d$, $j=\lb_i,\ldots, \ub_i$.
Since $d$ of these functions are linearly independent, all $d$ of them must have a different $i$.
Since all $d$ of them satisfy $z_k(\x^*)=0$, it holds that $x^*_i=j$, for some $j=\lb_i,\ldots,\ub_i$, $\forall i=1,\ldots, d$, which is what is claimed.


 (II) Let $\x^*$ be a strict local minimum of \red{the advanced model}.
 By Theorem~\ref{thm:localminima}, there are $d$ linearly independent $z_k$ with $z_k(\x^*)=0$.
 This means that all $z_k$ together must depend on all $x_i$, $i=1,\ldots, d$.
 From Algorithm~\ref{alg:model2} it can be seen that all functions $z_k$ have the same form as in \red{the basic model}, that is, $z_k(\x) = \pm(x_i-j)$, $i=1,\ldots, d$, $j=\lb_i,\ldots, \ub_i$, or they have the form $z_k(\x) = \pm(x_i-x_{i-1} - j)$, $i=2,\ldots, d$, $j=\lb_i-\ub_{i-1}, \ldots, \ub_i - \lb_{i-1}$.
 No matter the form, thus
 \begin{align}\label{eq:difx}
     x^*_i-x^*_{i-1}\in \Z\ \forall i=2,\ldots, d.
 \end{align}
 To arrive at a contradiction, suppose that $\exists s\in\{1,\ldots, d\}$ such that $x^*_s \not\in \Z$.
 Then by~\eqref{eq:difx}, $x^*_i\not\in\Z$ for all $i\in\{1,\ldots,d\}$.
 However, this is only possible if none of the $z_k$ have the form $z_k(\x) = \pm(x_i-j)$, and all $d$ of the $z_k$ have the form $z_k(\x) = \pm(x_i-x_{i-1} - j)$, for $d$ different $i$.
 But by construction, there are only $d-1$ of these last ones available, see Algorithm~\ref{alg:model2} (the for-loop starts at $2$).
 Therefore, it is not true that $\exists s\in\{1,\ldots, d\}$ such that $x^*_s \not\in \Z$.
 Hence, $x^*_i\in \Z \ \forall i=1,\ldots, d$, which is what the theorem claims.

(III) Let $\x^*$ be a non-strict local minimum of \red{the basic model} and suppose $x^*_s \not\in \Z \cup [\lb_s, \ub_s]$ for some $s\in\{1,\ldots, d\}$.
Let $L$ be the line segment from $x^*_s$ to the nearest point $\hat x_s$ in the set $\Z \cup [\lb_s, \ub_s]$, without including that point.
Since the only $z_k$ functions that depend on $x_s$ have the form 
$z_k(\x) = \pm(x_s-j)$, $j=\lb_s,\ldots, \ub_s$, it follows that $z_k(\x)=0$ does not happen on $L$ for any $z_k$ that depends on $x_s$.
Therefore, model $g$ is linear on this line segment, and since $\x^*$ is a non-strict local minimum and $g$ is continuous, $g$ retains the same value when replacing $x^*_s$ by $\hat x_s$.
This can be repeated for all $s$ for which $x^*_s \not\in \Z \cup [\lb_s, \ub_s]$, which proves the claim.

(IV) Let $\x^*$ be a non-strict local minimum of \red{the advanced model} and suppose $\x^* \not\in \Z$.
We first show that rounding $\x^*$ to the nearest integer does not change the sign of any $z_k$.
Note that all the $z_k$ of \red{the advanced model} have the form $z_k(\x) = \pm(x_i-j)$ or $z_k(\x) = \pm(x_i-x_{i-1} - j)$, for some $i=1,\ldots, d$ and some integer $j$.
Let $\bar x_i$ denote rounding $x_i$ to the nearest integer.
Then we have (because $j$ is integer): 
\begin{align*}
    x_i\leq j & \Rightarrow \bar x_i \leq j,\text{ and }
    x_i \geq j \Rightarrow \bar x_i \geq j,\\
    x_i-x_{i-1}\leq j &  \Rightarrow \bar x_i \leq \overline{x_{i-1}+j} 
    \Rightarrow     \bar x_i - \bar x_{i-1} \leq j,\\
    x_i-x_{i-1}\geq j & \Rightarrow \bar x_i - \bar x_{i-1} \geq  j.
\end{align*}
Since the sign of none of the $z_k$ change when rounding, and model $g$ is only nonlinear when going from $z_k(\x) < 0$ to $z_k(\x)>0$ for some $k=1,\ldots, D$, it follows that $g$ is linear on the line segment from $\x^*$ to the nearest integer.
Together with the fact that $\x^*$ is a non-strict local minimum, it follows that $g$ retains the same value on this line segment. 
Finally, the claim is valid because $g$ is continuous.
\end{proof}

\subsection{Fitting the model}\label{sec:training}

Because the surrogate model $g$ is linear in its parameters $c_k$, fitting the model can be done with linear regression.
Given input-output pairs $(\x_i, y_i, i=1,\ldots, N$), this is done by solving the regularized linear least squares problem
\begin{align}
    \min_{\c_N} \sum_{n=1}^N \left(y_n - g(\x_n, \c_N)\right)^2 + \lambda ||\c_N-\c_0||^2_2, \label{eq:LS}
\end{align}
with regularization parameter $\lambda$ and initial weights $\c_0$.
The regularization part is added to overcome ill-conditioning, noise, and model over-fitting.
Furthermore, by choosing $\c_0=[0, 1, \ldots, 1]^T$, it is ensured that the surrogate model is convex before the first iteration~\cite{CDONEpaper}.
In this work, $\lambda=0.001$ has been chosen.


To prevent having to solve this problem at every iteration (with runtime $O(N^3)$), \eqref{eq:LS} is solved with
the recursive least squares algorithm~\cite{sayed1998recursive}.
This algorithm has runtime $O(D^2)$ per iteration, with $D$ the number of basis functions used by the model.
This implies that the computation time per iteration does not depend on the number of measurements, which is a big advantage compared to Bayesian optimization algorithms (which usually have complexity $O(N^2)$ per iteration).
The memory is also $O(D^2)$, because 
a $D \times D$ covariance matrix needs to be stored.
Since $D$ scales linearly with the input dimension $d$ and with the lower and upper bounds, the computational complexity of fitting the surrogate model is $O(p^2 d^2)$, with $p=\max_i(\ub_i-\lb_i)$.

\red{\subsubsection{Model visualization}
\label{sec:TSP4}}

\red{To visualize the surrogate model used by the IDONE algorithm, the fitting procedure is applied to a simple traveling salesman problem with four cities.
The distance matrix for the cities is shown in Table~\ref{tab:simpleTSP}.
The decision variables are chosen as follows: the route starts at city 1, then variable $x_1\in\{1,2,3\}$ determines which of the three remaining cities is visited, then variable $x_2\in\{1,2\}$ determines which of the two remaining cities is visited; then the one remaining city is visited, then city 1 is visited again.
This problem has two optimal solutions: $\x=[1,2]^T$ (route 1-2-4-3-1) and $\x=[2,2]^T$ (route 1-3-4-2-1), both with a total distance of $80$.
All other solutions have a total distance of $95$.
}

\red{Figure~\ref{fig:simpleTSP} shows what the surrogate model looks like after taking measurements in all possible data points for this problem, which is possible due to the low number of possibilities.
It can be observed that this model is piece-wise linear and that any local minimum retains the same value when rounding to the nearest integer.
Furthermore, the diagonal lines (see also Figure~\ref{fig:basisfunc}) make the advanced model more accurate.}

\begin{table}[tb]
\caption{Distance matrix for the simple \red{traveling salesman problem}}\label{tab:simpleTSP}
\centering
\begin{tabular}{c||c|c|c|c|}
& 1 & 2 & 3 & 4\\
\hline
\hline
1 & & $10$ & $15$ & $20$\\
\hline
2 & $10$ & & $35$ & $25$\\
\hline
3 & $15$ & $35$ & & $30$\\
\hline
4 & $20$ & $25$ & $30$ & \\
\hline
\end{tabular}
\end{table}

\begin{figure}[tb]
\centering
\includegraphics[width=0.45\textwidth]{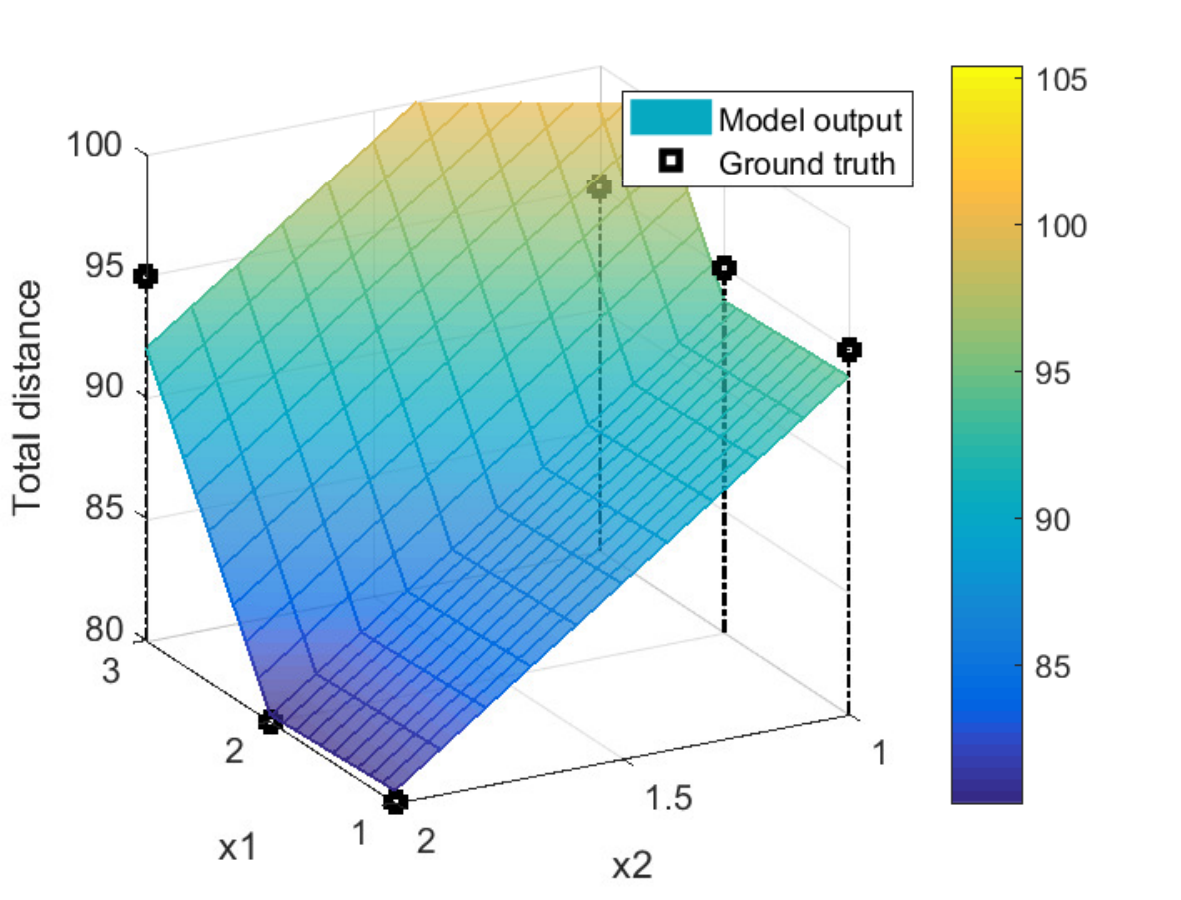}%
\includegraphics[width=0.45\textwidth]{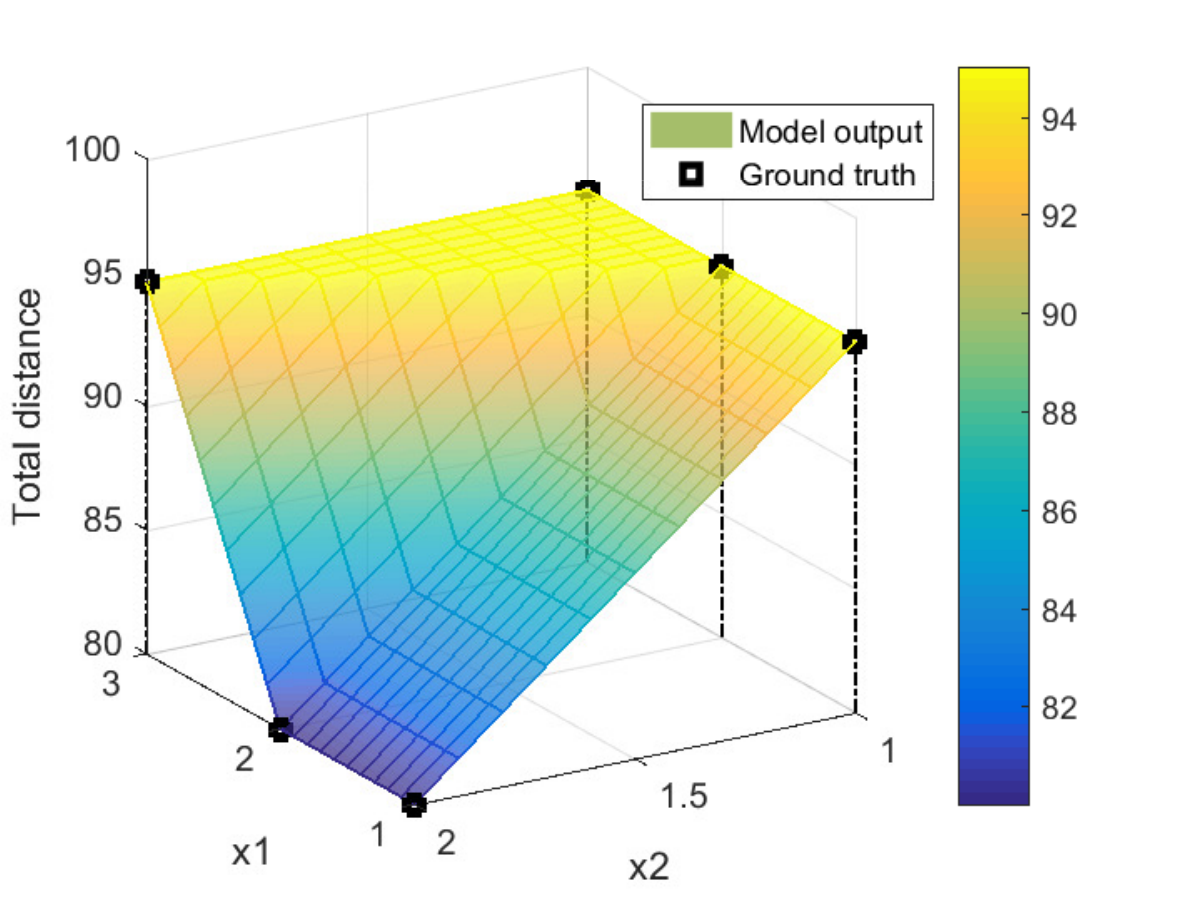}
\caption{Model output for the simple \red{traveling salesman problem} for \red{the basic model} (left) and \red{the advanced model} (right)
. The starting city is city 1, $x_1$ determines which remaining city is visited next, $x_2$ determines which remaining city is visited third, then the only remaining city is visited, and then city 1 is visited again.} \label{fig:simpleTSP}
\end{figure}

\subsection{Finding the minimum of the model}\label{sec:minimize}

After fitting the model $g$ at iteration $N$, the algorithm proceeds to find a local minimum using the new weights $\c_N$:
\begin{align}
    \x^* = & \  \arg\min_{\x} g(\x,\c_N),\nonumber\\
         \ \mathbf{s.t.\ } & \  \x \in \Z^d,\nonumber\\
     & \ \lb_i  \leq x_i \leq \ub_k, \ i=1, \ldots, d.
     \label{eq:minmodel}
\end{align}
\red{The BFGS method~\cite{nocedal}}
with a relaxation on the integer constraint 
was used to solve the above problem, with a provided analytical derivative of $g$. 
\red{In this work, }the derivative of the basis function $\mathrm{ReLU}(z)=\max(0,z)$ has been chosen to be $0.5$ at $z=0$.
\red{The optimal solution} was rounded to the nearest integer per Theorem~\ref{thm:main}.


\subsection{Exploration}\label{sec:exploration}



After fitting the model and finding its minimum, a new point $\x_{N+1}$ needs to be chosen to evaluate the function $f$.
As in DONE~\cite{DONEpaper}, 
a random perturbation $\delta$ is added to the found minimum:
$    \x_{N+1}  = \x^* + \delta$,
but instead of a continuous random variable, $\delta\in\{-1,0,1\}^d$ is a discrete random variable with the following probabilities:
\begin{align}\label{eq:explo2}
    P(\delta_i=0) & = 1-p,\nonumber\\
    P(\delta_i=1) &= \left\{\begin{array}{ll}p, & x^*_i=\lb_i,\\ 0, &  x^*_i=\ub_i,\\ p/2, & \mathrm{otherwise},\end{array}\right. \nonumber\\
    P(\delta_i=-1) &= \left\{\begin{array}{ll}0, & x^*_i=\lb_i,\\ p, &  x^*_i=\ub_i,\\ p/2, & \mathrm{otherwise}.\end{array}\right. 
\end{align}

In this work, $p=1/d$ has been chosen \red{($d$ is the number of variables)}.

\subsection{IDONE algorithm}\label{sec:pseudocode}

The IDONE algorithm iterates over three phases: updating the surrogate model with recursive least squares, finding the minimum of the model, and performing the exploration step.
The pseudocode for the algorithm is shown in Algorithm~\ref{alg:IDONE}.
Depending on which subroutine is used in the first line, we refer to this algorithm as either \red{IDONE-basic (using the basic model) or IDONE-advanced (using the advanced model).
}

\begin{algorithm}[htbp]
\caption{\red{IDONE-advanced, IDONE-basic}}\label{alg:IDONE}
 \begin{algorithmic}
 \Require $\x_1 \in \R^d$, $\lambda \in \R$, $(\lb_i, \ub_i) \ \forall i=1,\ldots, d$, $N\in \N$, $p\in [0,1]$
 \Ensure $\x_N$, $y_N$
 \State Get $\w_k$, $b_k$, $k=1,\ldots D$ from Algorithm~\ref{alg:model1} for \red{IDONE-basic} or from Algorithm~\ref{alg:model2} for \red{IDONE-advanced}
 \State $\c_0 \leftarrow [0, 1, \ldots, 1]^T \in \R^D$
\For{$n=1, \ldots, N$}
    \State Evaluate $y_n = f(\x_n) + \epsilon$
    \State Calculate $\c_n$ from $\c_{n-1}$ with recursive least squares 
    \State Compute $\x^*$ using \eqref{eq:minmodel}
    \If{$n<N$}
    \State $\x_{N+1} \leftarrow \x^* + \delta$, with $\delta$ as in Section~\ref{sec:exploration} 
    \EndIf
\EndFor
\end{algorithmic}
\end{algorithm}

\section{\red{Experimental} results}\label{sec:results}

\red{To determine how the IDONE algorithm compares to other black-box optimization algorithms in terms of convergence speed and scalability, it has been applied to two problems:
}
%
%
\red{finding a robust route for a noise-perturbed asymmetric traveling salesman benchmark problem with $17$ cities,
and an artificial convex binary optimization problem.
The first problem gives a first indication of the algorithm's performance on an objective function that follows from a simulation where there is a network structure.
The second problem shows an easier and more tangible situation - due to the convexity and the fact that we know the global optimum - which makes it easier to interpret results.}

\red{The algorithm is compared with several different black-box optimization algorithms: random search (RS), simulated annealing (SA), and two different Bayesian optimization algorithms: Matlab's \emph{bayesopt} function~\cite{bayesopt} (BO), and the Python library \emph{HyperOpt}~\cite{bergstra2013hyperopt} (HypOpt).
The RS, IDONE-advanced and HypOpt algorithms are implemented in Python and run on a cluster ($32$ Intel Xeon E5-2650 $2.0$ GHz CPUs), while the SA, BO and IDONE-basic algorithms are implemented in Matlab and run on a laptop (Intel Core i7-6600U $2.6$ GHz CPU with $8$ GB RAM), which is about $34\%$ faster%
\footnote{The cluster makes it faster to run multiple experiments at the same time, but no effort has been made to parallelize the algorithms. Matlab makes use of multiple cores automatically, and since the mentioned laptop has four cores, the computational results of the algorithms that were implemented in Matlab can be up to $5.4$ times faster than the python implementations, not just $34\%$.}
 according to  \url{https://www.cpubenchmark.net/singleThread.html}.
Whenever we compare the runtimes of algorithms evaluated in these two different machine environments we will be more careful with our conclusions.
For BO and HypOpt, we used the default settings.
It should be noted that BO and HypOpt are both aimed at minimizing black-box functions using as few function evaluations as possible.
For SA, the settings are explained below.
}

In the context of the IDONE algorithm, the SA algorithm essentially consists of just the exploration step of the IDONE algorithm (see Section~\ref{sec:exploration}), coupled with a probability of returning to the previous candidate solution.
Suppose the current best solution is $(\x_b, y_b)$, and that the exploration step as defined in Section~\ref{sec:exploration} gives a new candidate solution $(\x_c, y_c)$.
If $y_c<y_b$, then $\x_c$ is accepted as the new best solution.
Else, there is a probability that $\x_c$ is still accepted as the new best solution.
This probability is equal to $e^{(y_b-y_c)/T}$, with $T$ a so-called temperature. In this work, the simulated annealing algorithm starts out with a starting temperature $T=T_0$, and the temperature is multiplied with a factor $T_f$ every iteration.
This strategy is called a cooling schedule.
For the asymmetric traveling salesman problem, $T_0=4.48$ and $T_f=0.996$ have been chosen.
For the convex binary optimization problem, $T_0=1$ and $T_f=0.95$ have been chosen.


\subsection{\red{Robust routes for an} asymmetric traveling salesman problem (17 cities)}

\red{Consider the asymmetric TSP benchmark called BR17.}
This benchmark was taken from the TSPLIB website~\cite{TSPlib}, a library of sample instances for the traveling salesman problem.
\red{While there exist specific solvers developed for this problem, these solvers are not adequate if the objective to be minimized is perturbed by noise.}
\red{Here, }noise $\epsilon\in[0,\red{1}]$, with a uniform distribution, was added to the distances between any two cities
\red{(for distances other than $0$ or infinity, which both occurred once per city, the mean distance between cities is $16.43$ for this instance)}.
\red{Furthermore, every time a sequence of cities has been chosen, we evaluate this route $100$ times, with different noise samples.
The objective is the worst-case (largest) route length of the chosen sequence of cities.
Minimizing this objective should then result in a route that is robust to the noise in the problem.
}
\red{For the variables} the same encoding as in Section~\ref{sec:TSP4} has been used, 
giving $15$ integer variables in total.

All algorithms were run $5$ times on this problem, and the results are shown in Figure~\ref{fig:TSP17}.
\red{
The differences between the run times of the algorithms are significantly larger than between the two machine environments, so we ignore this subtlety here.
The BO algorithm was not included as it took over $80$ hours per run.}
It can be seen that \red{IDONE-advanced achieves similar results as HyperOpt,}
\red{although its computation time is over ten times larger}.
\red{Both HyperOpt and IDONE-advanced outperform the simpler benchmark methods.
It seems IDONE-basic is unable to deal with the complex interaction between the variables due to the basic structure of the model, as it performs worse than the baseline algorithms.
}

\begin{figure}[tb]
\begin{center}
\includegraphics[width=0.99\columnwidth]{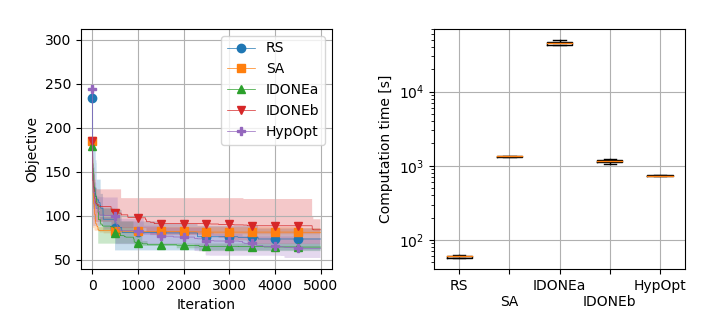}
\caption{Best found \red{worst-case} total distance \red{(\emph{left}) and corresponding computation time (\emph{right}) }of the \red{noisy} TSP with $17$ cities for IDONE-advanced (IDONEa), IDONE-basic (IDONEb), \red{random search (RS), simulated annealing (SA)}, and 
\red{HyperOpt (HypOpt)}, averaged over $5$ runs. 
The shaded area (\emph{left}) visualizes the range across all $5$ runs, as do the boxplots (\emph{right}).
} \label{fig:TSP17}%

\end{center}
\end{figure}

\subsection{\red{Convex binary optimization}}

\red{To 
gain a better understanding of the different algorithms, the second experiment is done on a function with a known mathematical formulation.}
Consider the function
\begin{align}
    f(\x) = (\x-\x^*)^T A (\x-\x^*),\label{eq:toyfunction}
\end{align}
with $A$ a random positive semi-definite matrix, and $\x^*\in\{0,1\}^{\red{d}}$ a randomly chosen vector\red{, with $d$ the number of binary variables}. 
\red{The optimal solution $\x^*$ or the structure of the function is not given to the different algorithms, only the number of variables and the fact that they are binary.
}
Starting from a matrix $U$ where each element is randomly generated from a uniform $[0,1]$ distribution, matrix $A$ is constructed as 
\begin{align}
    A &= (U + U^T)/\red{d} + I_{\red{d}\times \red{d}},
\end{align}
with $I$ the identity matrix.
The function $f$ can only be accessed via noisy measurements $y=f(\x)+\epsilon$, with $\epsilon\in[0,1]$ a uniform random variable.
We ran $100$ experiments with this function, with IDONE and \red{the other black-box optimization} algorithms.
For each run, $A$ and $\x^*$ were randomly generated, as well as the initial guess $\x_0$. 
\red{All} algorithms were stopped after taking $1000$ function evaluations, and the best found objective value was stored at each iteration.
\red{Figure~\ref{fig:toyexample} shows a convergence plot for the case $d=100$.
It can be seen that the two variants of IDONE
have the fastest convergence.
The large number of variables is too much for a pure random search, but also for HyperOpt.
Simulated annealing still gives decent results on this problem.
}

\red{Figure~\ref{fig:toyexample_differentdims} shows the final objective value and computation time after $1000$ iterations for the same problem for different values of $d$.
The number of variables $d$ was varied between $5$ and $150$.
The Bayesian optimization implementation of Matlab was only evaluated for $d=5$ due to its large computation time.
As can be seen, IDONE (both versions) is the only algorithm that consistently gives a solution at or close to the optimal solution (which has an objective value between $0$ and $1$) for the highest dimensions.
Where all algorithms get at or close to the optimal solution for problems with $10$ variables or less, the difference between the algorithms becomes more distinguishable when $20$ or more variables are considered.
The strengths of HyperOpt, such as dealing with different types of variables that can have complex interactions, are not relevant for this particular problem, and the Parzen estimator surrogate model does not seem to scale well to higher dimensions compared to the piece-wise linear model used by IDONE.
}


\begin{figure}[tbp]
\centering
\includegraphics[width=0.55\textwidth]{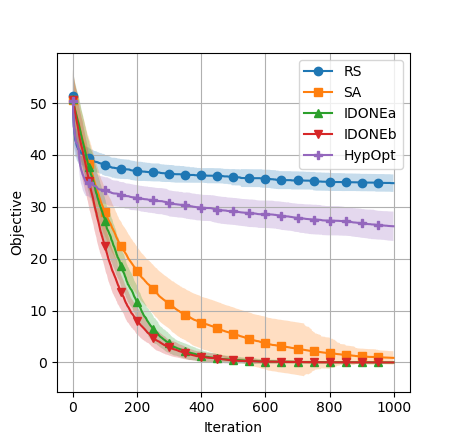}
\caption{
\red{Lowest objective value found at each iteration of the binary convex optimization example with $100$ binary variables, averaged over $100$ runs.
The shaded area indicates the standard deviation.
For every run, the initial value, matrix $A$, and vector $\x^*$ were chosen randomly. 
}
} 
\label{fig:toyexample}
\end{figure}

\begin{figure}[tbp]
\centering
\includegraphics[width=0.99\textwidth]{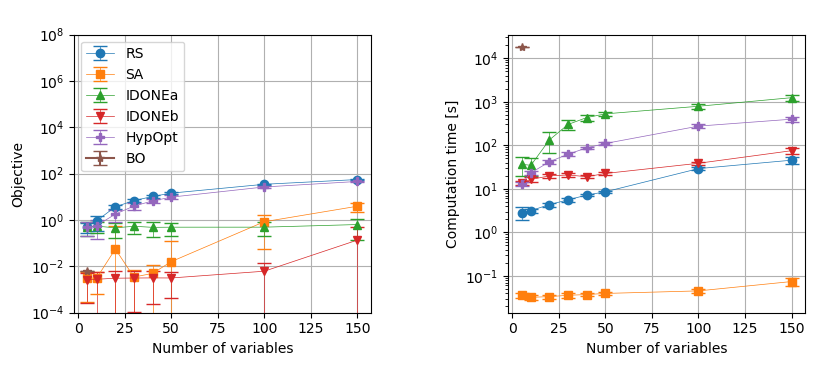}
\caption{\red{Objective value (\emph{left}) and computation time (\emph{right}) of the convex binary optimization problem for the different algorithms after $1000$ iterations, averaged over $100$ runs, for problems with different numbers of variables $d$.}}
\label{fig:toyexample_differentdims}
\end{figure}

\section{Conclusions and future work}\label{sec:conclusion}

The IDONE algorithm is a black-box optimization algorithm that
is designed for combinatorial problems with binary or integer constraints, and had shown to be useful in particular when the objective 
can only be accessed via costly and noisy evaluations.
By using a surrogate model that is designed in such a way that its optimum lies in an integer solution, the algorithm can be applied to combinatorial optimization problems without having to resort to rounding in the objective function.
IDONE has a fixed computation time per iteration that scales quadratically with the number of variables but is not influenced by the number of times the function has been evaluated, which is an advantage compared to Bayesian optimization algorithms.
One variant of the proposed algorithm, IDONE-advanced, has been shown to outperform random search and simulated annealing on the problem of finding robust routes in a noise-perturbed traveling salesman benchmark problem, and on a convex binary optimization problem with up to $150$ variables.
The other variant of the algorithm, IDONE-basic, is a lot faster, but only performed well in the second experiment.
HyperOpt, a popular Bayesian optimization algorithm, performs similar as IDONE-advanced on the traveling salesman benchmark problem, but does not scale as well on the binary optimization problem.
These results show that there is room for improvement in the use of surrogate models for black-box combinatorial optimization, and that using continuous models with integer-valued local minima is a new and promising way forward.

In future work, the special structure of the surrogate model will be further exploited to provide a faster implementation, and the algorithm will be tested on real-life applications of combinatorial optimization with expensive objective functions.
The question also arises whether this algorithm would perform well in situations where the objective function is not expensive to evaluate, or does not contain noise.
Population-based methods perform particularly well on cheap black-box objective functions, so it would be interesting to see if they could be combined with the surrogate model used in this paper.
As for the noiseless case, it is known that for continuous variables it becomes easy in this case to estimate the gradient and use more traditional gradient-based methods, but in the case of discrete variables the traditional combinatorial optimization methods might still benefit from IDONE's piece-wise linear surrogate model.
Where surrogate-based optimization techniques have had great success in continuous optimization problems from many different fields, we hope that this work opens up the route to success of these techniques for the plenty of open combinatorial problems in these fields.

\bibliographystyle{ieeetr}
\bibliography{mybib}

\begin{thebibliography}{10}

\bibitem{jones1998efficient}
D.~R. Jones, M.~Schonlau, and W.~J. Welch, ``Efficient global optimization of
  expensive black-box functions,'' {\em Journal of Global optimization},
  vol.~13, no.~4, pp.~455--492, 1998.

\bibitem{gosavi2003simulation}
A.~Gosavi, {\em Simulation-Based Optimization: Parametric Optimization
  Techniques and Reinforcement Learning}, vol.~55.
\newblock Springer, 2015.

\bibitem{conn2009introduction}
A.~R. Conn, K.~Scheinberg, and L.~N. Vicente, {\em Introduction to
  derivative-free optimization}, vol.~8.
\newblock Siam, 2009.

\bibitem{DONEpaper}
L.~Bliek, H.~R. G.~W. Verstraete, M.~Verhaegen, and S.~Wahls, ``Online
  optimization with costly and noisy measurements using random fourier
  expansions,'' {\em IEEE Transactions on Neural Networks and Learning
  Systems}, vol.~29, pp.~167--182, Jan 2018.

\bibitem{snoek2012practical}
J.~Snoek, H.~Larochelle, and R.~P. Adams, ``Practical {B}ayesian optimization
  of machine learning algorithms,'' in {\em Advances in neural information
  processing systems}, pp.~2951--2959, 2012.

\bibitem{martinez2009bayesian}
R.~Martinez-Cantin, N.~de~Freitas, E.~Brochu, J.~Castellanos, and A.~Doucet,
  ``A {B}ayesian exploration-exploitation approach for optimal online sensing
  and planning with a visually guided mobile robot,'' {\em Autonomous Robots},
  vol.~27, no.~2, pp.~93--103, 2009.

\bibitem{garrido2017dealing}
E.~{Garrido-Merch{\'a}n} and D.~{Hern{\'a}ndez-Lobato}, ``{Dealing with
  Integer-valued Variables in Bayesian Optimization with Gaussian Processes},''
  {\em ArXiv e-prints}, June 2017.

\bibitem{verwer2017auction}
S.~Verwer, Y.~Zhang, and Q.~C. Ye, ``Auction optimization using regression
  trees and linear models as integer programs,'' {\em Artificial Intelligence},
  vol.~244, pp.~368--395, 2017.

\bibitem{verbeeck2013multi}
D.~Verbeeck, F.~Maes, K.~De~Grave, and H.~Blockeel, ``Multi-objective
  optimization with surrogate trees,'' in {\em Proceedings of the 15th annual
  conference on Genetic and evolutionary computation}, pp.~679--686, ACM, 2013.

\bibitem{CDONEpaper}
L.~Bliek, M.~Verhaegen, and S.~Wahls, ``Online function minimization with
  convex random {R}e{LU} expansions,'' in {\em Machine Learning for Signal
  Processing (MLSP), 2017 IEEE 27th International Workshop on}, pp.~1--6, IEEE,
  2017.

\bibitem{baptista2018bayesian}
R.~Baptista and M.~Poloczek, ``Bayesian optimization of combinatorial
  structures,'' in {\em International Conference on Machine Learning},
  pp.~471--480, 2018.

\bibitem{ueno2016combo}
T.~Ueno, T.~D. Rhone, Z.~Hou, T.~Mizoguchi, and K.~Tsuda, ``Combo: An efficient
  {B}ayesian optimization library for materials science,'' {\em Materials
  discovery}, vol.~4, pp.~18--21, 2016.

\bibitem{aarts2003local}
E.~H. Aarts and J.~K. Lenstra, {\em Local search in combinatorial
  optimization}.
\newblock Princeton University Press, 2003.

\bibitem{rajeev1992discrete}
S.~Rajeev and C.~Krishnamoorthy, ``Discrete optimization of structures using
  genetic algorithms,'' {\em Journal of structural engineering}, vol.~118,
  no.~5, pp.~1233--1250, 1992.

\bibitem{kennedy1997discrete}
J.~Kennedy and R.~C. Eberhart, ``A discrete binary version of the particle
  swarm algorithm,'' in {\em Systems, Man, and Cybernetics, 1997. Computational
  Cybernetics and Simulation., 1997 IEEE International Conference on}, vol.~5,
  pp.~4104--4108, IEEE, 1997.

\bibitem{dorigo1999ant}
M.~Dorigo, G.~D. Caro, and L.~M. Gambardella, ``Ant algorithms for discrete
  optimization,'' {\em Artificial life}, vol.~5, no.~2, pp.~137--172, 1999.

\bibitem{hong2006discrete}
L.~J. Hong and B.~L. Nelson, ``Discrete optimization via simulation using
  compass,'' {\em Operations Research}, vol.~54, no.~1, pp.~115--129, 2006.

\bibitem{MORRISON201679}
D.~R. Morrison, S.~H. Jacobson, J.~J. Sauppe, and E.~C. Sewell,
  ``Branch-and-bound algorithms: A survey of recent advances in searching,
  branching, and pruning,'' {\em Discrete Optimization}, vol.~19, pp.~79 --
  102, 2016.

\bibitem{mockus2012bayesian}
J.~Mockus, {\em Bayesian approach to global optimization: theory and
  applications}, vol.~37.
\newblock Springer Science \& Business Media, 2012.

\bibitem{bergstra2013hyperopt}
J.~Bergstra, D.~Yamins, and D.~D. Cox, ``Hyperopt: A python library for
  optimizing the hyperparameters of machine learning algorithms,'' in {\em
  Proceedings of the 12th Python in science conference}, pp.~13--20, 2013.

\bibitem{rahimi2008uniform}
A.~Rahimi and B.~Recht, ``Uniform approximation of functions with random
  bases,'' in {\em Communication, Control, and Computing, 2008 46th Annual
  Allerton Conference on}, pp.~555--561, IEEE, 2008.

\bibitem{lecun2015deep}
Y.~LeCun, Y.~Bengio, and G.~Hinton, ``Deep learning,'' {\em Nature}, vol.~521,
  no.~7553, p.~436, 2015.

\bibitem{sayed1998recursive}
A.~H. Sayed and T.~Kailath, ``Recursive least-squares adaptive filters,'' {\em
  The Digital Signal Processing Handbook}, vol.~21, no.~1, 1998.

\bibitem{nocedal}
S.~Wright and J.~Nocedal, ``Numerical optimization,'' {\em Springer Science},
  vol.~35, pp.~67--68, 1999.

\bibitem{bayesopt}
``Bayesopt (mathworks.com documentation).''
  \url{https://www.mathworks.com/help/stats/bayesopt.html}.
\newblock Accessed 14-10-2019.

\bibitem{TSPlib}
``Tsplib.'' \url{http://elib.zib.de/pub/mp-testdata/tsp/tsplib/tsplib.html}.
\newblock Accessed 04-02-2019.

\end{thebibliography}

\end{document}